\newcommand{\sectionWithLabel}[1]{\section{#1}\label{sec:#1}}
\theoremstyle{remark}
\newtheorem*{remark}{Remark}
\newtheorem{theorem}{Theorem}
\newtheorem{example}[theorem]{Example}
\newtheorem{corollary}[theorem]{Corollary}
\newtheorem{definition}{Definition}
\newcommand{\BibTeX}{B\kern-.05em{\sc i\kern-.025em b}\kern-.08em\TeX}
\begin{document}


\begin{frontmatter}


\paperid{933} 


\title{Examining Independence in Ensemble Sentiment Analysis: A Study on the Limits of Large Language Models Using the Condorcet Jury Theorem}


\author[A,B]{\fnms{Baptiste Lefort}~\snm{Lefort}\thanks{Corresponding Author. Email: baptiste.lefort@centralesupelec.fr}\footnote{Equal contribution.}}

\author[A,C]{\fnms{Eric Benhamou}~\snm{Benhamou}\footnotemark}

\author[A]{\fnms{Jean-Jacques Ohana}}
\author[A]{\fnms{Beatrice Guez}}
\author[A]{\fnms{David Saltiel}}
\author[A]{\fnms{Thomas Jacquot}} 

\address[A]{Ai for Alpha}
\address[B]{CentraleSupelec}
\address[C]{Dauphine - PSL}

\begin{abstract}
This paper explores the application of the Condorcet Jury theorem to the domain of sentiment analysis, specifically examining the performance of various large language models (LLMs) compared to simpler natural language processing (NLP) models. The theorem posits that a majority vote classifier should enhance predictive accuracy, provided that individual classifiers' decisions are independent. Our empirical study tests this theoretical framework by implementing a majority vote mechanism across different models, including advanced LLMs such as ChatGPT 4. Contrary to expectations, the results reveal only marginal improvements in performance when incorporating larger models, suggesting a lack of independence among them. This finding aligns with the hypothesis that despite their complexity, LLMs do not significantly outperform simpler models in reasoning tasks within sentiment analysis, showing the practical limits of model independence in the context of advanced NLP tasks.
\end{abstract}
\end{frontmatter}



\section{Introduction}
The integration of Natural Language Processing (NLP) into the financial sector has been pivotal in providing insights from textual data, evolving significantly with the release of Large Language models and generative models \citep{arner2015evolution, fatouros2023deepvar}. In particular, NLP has allowed to do sentiment analysis. The terminology sentiments analysis emphasizes that the method aims at extracting market sentiments or tones and then at offering predictive insights thanks to its analysis \citep{Tetlock2007GivingMarket,schumaker2009textual}.

Compared to other NLP applications, financial sentiment analysis faces unique challenges \citep{wankhade2022survey}. Financial narratives often involve complex, domain-specific terminologies and exhibit a multiplicity of sentiments tied to different entities, which can render general sentiment analysis tools ineffective \citep{loughran2011liability, poria2017review, sensoy2018evidential}. The complexity is further exacerbated by the nuances of financial news, which may be reactive rather than predictive, and the difficulty of integrating sentiment scores into practical investment strategies \citep{yuan2020target,iordache2022investigating}.

Historically, sentiments analysis relied on simple machine learning techniques and predefined word lists \citep{mejova2009sentiment}. Later on, with the introduction of more sophisticated NLP techniques, and in particular the deployment of models such as BERT and FinBERT, sentiment analysis has significantly refined its accuracy \citep{devlin2018bert, liu2021finbert}. With the emergence of Large Language Models (LLMs) like GPT, and particularly its variants tailored for conversational applications such as ChatGPT, a new paradigm in AI has been set forth \citep{brown2020language,roumeliotis2023chatgpt,openai2023gpt4}. These models offer enhanced capabilities in understanding and generating human-like text, presenting new opportunities to improve sentiment analysis within financial contexts. 

However, it is not clear how these new models differ from previous ones, motivating for a detailed and theoretically sound comparison with previous ones. In this paper, we focus on examining the independence of ensemble sentiment analysis conducted through LLMs, utilizing the Condorcet Theorem to investigate the limits and capabilities of these models in financial applications. The goal is to determine the extent to which these models can operate independently and to what degree their analysis can be considered reliable and unique in the context of complex sentiment tasks. Understanding this independence is critical, as it directly impacts the efficacy of models in real-world financial decision-making scenarios.

The paper is structured as follows: Section \ref{sec:Related works} reviews related works. Section \ref{sec:Contributions} outlines the primary contributions of our work. Section \ref{sec:Condorcet Jury theorem} presents the theoretical framework for the Condorcet Jury theorem for bagging on multiple class classifications. Section \ref{sec:Bagging experiment} presents a bagging experiment for doing sentiment analysis on various LLMs and highlights that there is no real improvement when using larger models.  Section \ref{sec:Discussion} discusses why the Condorcet Jury theorem does not hold in this setting, proving a lack of independence among these models and suggesting that more recent models do not perform better in reasoning tasks within sentiment analysis. Section \ref{sec:Conclusion} provides concluding remarks and future research directions.

\sectionWithLabel{Related works}
The exploration of ensemble methods in natural language processing, specifically through the lens of the Condorcet theorem, has garnered considerable attention in recent literature. This section discusses works closely related to our research and explains our contribution or similarities.

\begin{enumerate}
    \item \textbf{Ensemble Methods in Sentiment Analysis:} A significant portion of research in sentiment analysis has advocated for the use of ensemble techniques to improve classification accuracy. For example, \cite{kazmaier2022power} demonstrated that ensemble classifiers could robustly enhance sentiment detection across various social media platforms by leveraging multiple model predictions. Their work primarily focused on conventional machine learning models and did not explore the application of large language models (LLMs) in their ensemble. Our study extends this area by incorporating LLMs like GPT 3.5, GPT-4 and a finetuned version of GPT3.5 into the ensemble framework, testing the limits and benefits of bagging such advanced models in sentiment analysis.

    \item \textbf{Application of Condorcet Theorem in Machine Learning:} The theoretical underpinning of our study is based on the Condorcet Jury theorem, historically used in majority voting \citep{boland1989majority,austen1996information}. This theorem proves that a majority vote for a binary decision among independent classifiers leads to superior collective accuracy under certain assumptions \citep{mercier2019majority}. \cite{srivastava2022ensemble} used the Condorcet Jury theorem to explain why their ensemble deep neural networks to screen for Covid-19 and pneumonia from radiograph images were demonstrating improved diagnostic outcomes. In our case, we extend the proof of the Condorcet Jury Theorem to multi-class problems by employing the assumption of an Independent, Well-Trained, and Uniformly Biased (IWTUB) set of classifiers toward the correct alternative. Additionally, we apply the Condorcet Jury Theorem in a novel context, shifting from an image recognition problem to evaluating the independence of decisions among various large language models (LLMs) and simpler natural language processing (NLP) models within a sentiment analysis task.

    \item \textbf{Independence of Classifiers in Advanced NLP Models:} The assumption of independence among classifiers is critical in applying the Condorcet theorem effectively. \cite{galar2011overview} provide an overview of ensemble methods for binary classifiers in multi-class problems, highlighting challenges in classifier independence. They noted that in many practical applications classifiers often exhibit correlated errors, which can significantly diminish the efficacy of ensemble methods. Our findings are in line with these perspectives, revealing that the anticipated independence among various LLMs, including state-of-the-art models like ChatGPT 4, is not sufficiently pronounced to harness the full potential of the majority vote mechanism dictated by the Condorcet theorem.
    
    \item \textbf{Performance of LLMs Versus Simpler Models:} Contrary to several prior studies that highlighted the superiority of LLMs in complex reasoning tasks \cite{openai2023gpt4,Lopez-Lira2023CanModels,lefort2024can}, our empirical results indicate only marginal performance improvements when these models are included in an ensemble for sentiment analysis. 
    \cite{zhang2023sentiment,lefort2024small} also offer a critical perspective on the effectiveness of LLMs in sentiment analysis, questioning their practical superiority in sentiments analysis. Likewise, \cite{zhang2023instruct} observed experimentally that although LLMs consistently outperform Smaller Language Models (SLMs) in a few-shot learning context (\citep{brown2020language}), they struggle on more complex models. This is consistent with our findings that large language models (LLMs) fail to outperform simpler models such as FinBERT \citep{araci2019finbert} and DistilRoBERTa \citep{sanh2020distilbert} in sentiment analysis tasks, suggesting their decisions are not independent.
\end{enumerate}

\sectionWithLabel{Contributions}
This paper seeks to enrich the field of natural language processing (NLP) by leveraging the Condorcet Jury Theorem within the specific context of sentiment analysis. Sentiment analysis requires sophisticated multi-class classification approaches. The key contributions can be summarized as follows:

\begin{enumerate}
    \item \textbf{Theoretical Contribution:} We expand the Condorcet Jury Theorem, which to our knowledge has only been proven for binary decisions, to address the more complex scenario of multi-class classification. This extension relies on the new introduced concept of IWTUB set. Doing multi-class is crucial for sentiment analysis as sentiments typically involves categorizing texts into multiple classes beyond simple binary labels, such as positive, negative, and neutral or even potentially more granular sentiments.
    
    \item \textbf{Empirical Validation of Model Non Independence:} Using a majority classifier with multiple NLP models, including fine-tuned FinBERT, DistilRoBERTa, GPT-3.5, and GPT-4, we found that majority voting did not improve performance. This suggests a lack of independence among the models, as indicated by the IWTUB set's last condition. Unlike traditional statistical tests like Pearson's chi-squared, Spearman's rank correlation, and mutual information, which assess dependencies between dataset variables, the Condorcet Jury Theorem uses voter competence and independence to predict the accuracy of group decisions, focusing on decision-making processes rather than dataset characteristics.
    
    \item \textbf{Insights on LLMs:} Our results suggest a significant overlap in the decision-making processes for financial sentiment analysis of both compact and advanced models and a potential lack of reasoning for generative LLMs like GPT.
\end{enumerate}

The paper's intuition is detailed in figure \ref{fig:paper_intuition} for enhanced clarity on the overall methodological approach.

\section{Extension of Condorcet jury theorem to multi-class}\label{sec:Condorcet Jury theorem}

\begin{figure*}
    \centering
    \resizebox{0.9\textwidth}{!}{
    \includegraphics{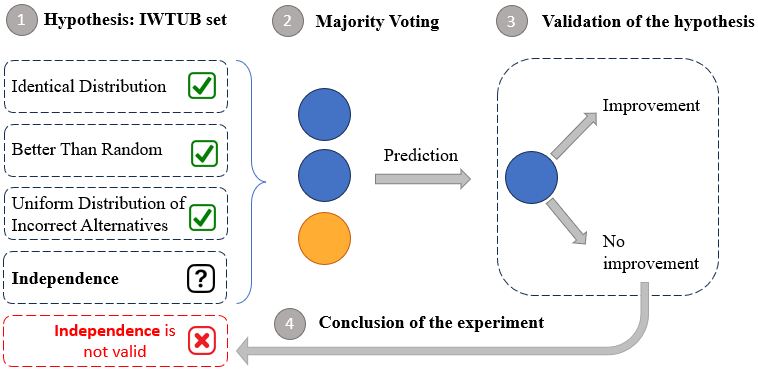}
    }
    \caption{Procedure diagram of models independence validity revocation in the Condorcet's theorem.}
    \label{fig:paper_intuition}
\end{figure*}

The Condorcet Jury theorem \citep{Condorcet1785}, was initially proven for a jury that should vote for a yes or no question. It states that if there is a majority preference for one option, then that option should be chosen as the overall winner in a collective decision-making process, provided that individual jury members are better at deciding than a blind guess and that there are independent \citep{list2001epistemic,morreau2021democracy}. Mathematically, if we change individual jury members into machine learning classifiers and assume that each classifier follows the same law, an ensemble majority voting bagging classifier should perform better than any individual classifiers. In this section, we will see how to extend these results to a multi class classification. We will need to assume that classifiers are somehow similar. This is the core assumption that will be challenged in our LLMs experiment.

\begin{definition}\label{def:IWTUB Set} \textbf{IWTUB Set:}  We say that a set of $n$ classifiers $\{C_1, C_2, \ldots, C_n\}$ is an IWTUB (Independent, Well-Trained and Uniformly Biased towards the correct Alternative) Set if all the classifiers satisfy the following four conditions:
\begin{enumerate}
    \item Identical Distribution: Classifiers have the same distribution.
    \item Better Than Random: Each classifier $C_i$ has an accuracy better than random guessing over the $k$ possible predicted labels, i.e., for each $C_i$ and for the true label $t$,  the probability $P(C_i(x) = t) $ is more than $\frac{1}{k}$. For binary classification scenarios, this threshold stands at 0.5. In our specific context, where there are three possible classes (positive, indecisive or negative), the threshold adjusts to one-third.
    \item Uniform Distribution Among Incorrect Alternatives: Each classifier $C_i$ has the same  probability which is lower than random guessing for choosing the incorrect alternatives, i.e., for each $C_i$ and for any incorrect label $y \neq t$,  the probability $P(C_i(x) = y)$ is less than $\frac{1}{k}$, 
    \item Independence: Classifiers make their predictions independently of each other.
\end{enumerate}
\end{definition}

Among all the assumptions, the last one (independence of models) is usually quite problematic as there are often correlation in the way models handle data. Under the assumption of IWTUB Set of classifiers, we can extend the Condorcet theorem to multi class classifications problems as follows.

\begin{theorem} \label{prop:Condorcet Theorem} \textbf{Condorcet Jury Theorem for Classifiers:}
For an IWTUB Set of classifiers ${C_1, C_2, \ldots, C_n}$, the majority vote classifier $C_{\text{bag}}$ exhibits higher accuracy than any individual classifier within the ensemble. When the number of classifiers tends to infinity, the majority classifier converges in probability to the true label. Conversely, if base classifiers perform worse than random guessing, as the number of classifiers tends to infinity, majority classifier will diverge in probability from the true label and hence be worse than the random classifier.
\end{theorem}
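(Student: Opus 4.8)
The plan is to prove the three claims---(i) the bagged classifier beats any individual, (ii) convergence to the truth as $n\to\infty$ when classifiers are better than random, and (iii) divergence when they are worse---by reducing the multi-class majority vote to a question about which label accumulates the most votes, and then controlling the vote counts with a law-of-large-numbers argument. First I would fix an input $x$ with true label $t$ and, for each candidate label $\ell \in \{1,\ldots,k\}$, define $V_\ell = \#\{i : C_i(x) = \ell\}$, the number of classifiers voting for $\ell$. By the Identical Distribution and Independence conditions of the IWTUB set, each $V_\ell$ is a sum of i.i.d.\ Bernoulli indicators, so $V_t \sim \mathrm{Bin}(n, p)$ with $p = P(C_i(x)=t) > 1/k$, while each incorrect $V_y \sim \mathrm{Bin}(n, q)$ with $q = P(C_i(x)=y) < 1/k$ (equal across all $y \neq t$ by the Uniform Distribution condition, with $p + (k-1)q = 1$). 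The bagged classifier outputs $t$ correctly precisely when $V_t > V_y$ for every incorrect $y$.

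The key step for the asymptotic claims (ii) and (iii) is to analyze $P(C_{\mathrm{bag}}(x) = t) = P\!\left(V_t > \max_{y\neq t} V_y\right)$. Since $p > q$, the strong law of large numbers gives $V_t/n \to p$ and $V_y/n \to q$ almost surely, so the event $V_t > V_y$ has probability tending to $1$ for each fixed incorrect $y$; a union bound over the finitely many $k-1$ incorrect labels then forces $P(C_{\mathrm{bag}}(x)=t) \to 1$, which is convergence in probability to the true label. For the converse in (iii), when $p < 1/k$ one no longer has $p$ exceeding every $q$, and the plurality mass concentrates away from $t$, driving $P(C_{\mathrm{bag}}(x)=t) \to 0$; I would make this rigorous by the same LLN comparison, noting that the winning label is whichever $\ell$ maximizes its voting probability and that this is not $t$ once $t$ is sub-random. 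The finite-$n$ monotonicity claim (i)---that $C_{\mathrm{bag}}$ already beats a single classifier---would follow from a direct comparison: a single classifier is correct with probability $p$, and I would show $P(V_t > \max_{y} V_y) \geq p$ by an argument exploiting the gap $p - q > 0$, ideally via a monotonicity-in-$n$ estimate on the tail of the multinomial.

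The hard part will be the finite-$n$ guarantee in claim (i), and in particular making the multi-class plurality comparison tight. In the classical binary Condorcet theorem one compares a single $\mathrm{Bin}(n,p)$ tail against $p$ and uses a clean pairwise majority; here the event $\{V_t > \max_{y\neq t} V_y\}$ is a genuinely multinomial (not binomial) event, and ties among the incorrect labels or between $t$ and an incorrect label must be handled---so I would need to specify the tie-breaking rule and verify the bias conditions ensure ties do not erode the advantage. I would expect to lean on the symmetry among incorrect labels (the Uniform Distribution condition) to collapse the $k-1$ comparisons, and on the strict inequalities $p > 1/k > q$ to guarantee a strictly positive margin that the binomial/multinomial concentration can exploit uniformly. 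The asymptotic statements (ii) and (iii) are comparatively routine once the LLN framing is in place; the delicate accounting is entirely in establishing the strict finite-sample improvement and correctly treating the plurality ties.
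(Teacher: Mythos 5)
Your proposal is correct where it is carried through, but it takes a genuinely different route from the paper's. You work directly with the per-label vote counts $V_\ell$, reduce plurality correctness to the multinomial event $\{V_t > \max_{y\neq t} V_y\}$, and settle the asymptotics by a law-of-large-numbers comparison of $V_t$ against each $V_y$ plus a union bound over the $k-1$ wrong labels. The paper instead encodes the labels as the integers $1,\dots,k$, places the true label at $k$ by the fairness symmetry, and uses the uniform-error condition to collapse the plurality event into a one-dimensional threshold on the sample mean: it asserts that the wrong-label counts are equal ($N_j = N_1$ for $j \neq k$), derives $N_k > N_1 \Leftrightarrow \frac{1}{n}\sum_i C_i > \frac{k+1}{2}$, and then applies Khintchine's weak law, the advantage entering through $\mathbb{E}[C_i] = \frac{k+1}{2} + a\frac{k}{2}$. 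What each buys: the paper's scalar reduction makes the limit argument one-dimensional and dovetails with its non-uniform extension (where a CLT is applied to the same sum), but the exact-equality step $N_j = N_1$ holds only in expectation for finite random samples, so your pairwise union-bound argument is the more careful path to the limit claims (ii) and (iii) --- including the converse, where the uniform-error condition gives every wrong label the same $q > p$, so $V_t/n \to p < q$ while $\max_y V_y/n \to q$, forcing the success probability to $0$. On the finite-$n$ claim (i), which you honestly flag as the hard part and leave open: the paper's proof does not establish it either --- it proves only the asymptotic convergence and leaves the finite-sample monotonicity and the converse implicit --- so your proposal is not behind the paper there; your instinct that the ties and the genuinely multinomial structure are the real obstruction (requiring an explicit tie-breaking convention and a coupling or conditioning argument in the style of the binary-case proofs) is accurate.
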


\begin{proof} The proof is an adaptation of the traditional two possible predicted labels presented for instance in \citep{sancho2022probability}. It relies on two arguments. The Khintchine weak law of large numbers is used to establish the convergence  in probability of the majority classifier applied to the set of our classifiers to the correct solution. Recall that a majority vote classifier is transitive \citep{kuncheva2006classifier,kuncheva2014majority} meaning that changing the order of classifiers does not change the outcome. Recall also that a majority vote classifier  is a fair function \cite{james2001majority}, meaning any true label plays same role and that applying a random permutation on the false labels does not change the result under the assumption of uniform distribution among incorrect alternative. \\

A classifier $C_i$ takes some inputs $X$ (in our NLP task some words) and provides an answer $y$ (in our case a numerical sentiment) indicating which label is the true label among possible answers that are integers ranging from $1$ to $k$. Because of fairness, there is a symmetry in true labels. Proving the result for one specific label will ensure that the result holds for any other true label. So without loss of generality, given some inputs $X$, we can assume that the true label is the largest label $k$ to make our computation simpler. \\

In this specific case, a classifier would perform better than random guessing if its probability to spot the true label is higher than random guessing. Because of the symmetry of the problem, random guessing probability is $1 / k$. Hence, if we define $a$  the constant, that we also  refer in the paper as the advantage of our classifier $C_i$ over randomness, the probability to claim the true label is:
\begin{equation}
P(C_i = k) = \frac{1}{k} + a
\end{equation}
with $ a > 0 $. Because our classifiers have equal probability of choosing the incorrect alternatives, the alternative probabilities for choosing any incorrect label $l \neq k$ is given by 
\begin{equation}
P(C_i = l)  = P(C_i \neq k) = \frac{1}{k} - \frac{a}{k-1}
\end{equation}
In other words, we will choose the incorrect label with a probability dminished from the random guess by the fraction of the advantage over the wrong labels in number \( k - 1\).

As alternative probabilities are positive, the advantage should satisfy $a \leq \frac{k-1}{k}$. Because of the assumption of independence and identical distribution of all our classifiers, each classifier has the same expectation which can be easily computed as the sum of possible outcomes times their corresponding probabilities, that can be split between the true label and the wrong labels.
\begin{eqnarray}
\mathbb{E}[ C_i ] &=&  k  P(C_i = k) + \sum_{j=1}^{k-1} j  P(C_i = j)\\
&=& \frac{k+1}{2} + a \frac{k}{2}  \label{eq:expectation}
\end{eqnarray}

The result obtained above is very interesting. It says that the expectation of a classifier is equal to the average random guess among all possible labels, (the term $\frac{k+1}{2}$) plus the advantage part (the term $ a \frac{k}{2}$). \\

If we impose that classifiers give equal weights to wrong label when making a wrong prediction, a series of classifiers will have a majority vote result equal to $k$ if within the series of classifiers, the dominating class is the $k$ class. If we denote by $N_{a}( \{ C_1, ..., C_n \} )$ (or simply $N_{a}$) the number of occurrence of $a$ in the predicted labels of the series of classifiers $\{ C_1, ..., C_n \}$, our assumption of uniform distribution among error will imply that assuming the wrong label $j$ is equivalent to assuming the wrong label $1$
\begin{eqnarray}
N_{j}( \{ C_1, ..., C_n \} )  = & N_{1}( \{ C_1, ..., C_n \} ) & \text{for} \,\, j \neq k
\end{eqnarray}

Hence, the dominating class will be $k$ if and only if $N_{k}\left( \{ C_1, ..., C_n \} \right)  >  N_{j}( \{ C_1, ..., C_n \} )$ for $j \neq k$ or simply $N_{k}  > N_{1}$. We can notice that we can directly infer from the sum of predicted labels whether the dominating class is $k$ or not as a classifier will have value of $k$ \(N_{k}\) times and otherwise all other labels \(N_{1} \) times. This gives us:
\begin{eqnarray}
\sum_{i=1}^n C_i & =  & k N_{k} + \frac{(k-1) k}{2} N_{1}  \\
&= &  \frac{ k (k+1)}{2} N_{1} + k \left( N_{k} -  N_{1} \right)
\end{eqnarray}

Hence,
\begin{eqnarray}
N_{k}  > N_{1} &\Leftrightarrow  &  \sum_{i=1}^n C_i > \frac{ k (k+1)}{2} N_{1}\\ &\Leftrightarrow &\frac{\sum_{i=1}^n C_i}{n} > \frac{ k+1}{2} 
\end{eqnarray}

\noindent where in the last part, we have used that $k N_{1} + \left( N_{k}-  N_{1} \right) = n$. In other terms, we get
\begin{equation}
\mathbb{P} \left[ \text{Majority is correct} \right]  =  \mathbb{P}\left[ \frac{\sum_{i=1}^n C_i}{n}  > \frac{k+1}{2} \right] 
\end{equation}

The Khintchine weak law of large numbers \citep{khintchine1929sur} states that the empirical average $ \frac{\sum_{i=1}^n C_i}{n} $ converges in probability to the expectation of the identically distributed classifiers. In other words, for any $\varepsilon_1, \varepsilon_2 >0 $, there exists $N$ large enough such that for every $n \geq N$, $\mathbb{P}\left( \left| \frac{\sum_{i=1}^n C_i}{n} - \mathbb{E}[ C_i ] \right| < \varepsilon_1 \right) \geq 1 - \varepsilon_2$. 

In particular for $\varepsilon_1 = a \frac{k}{2}$, we have for every $n \geq N$
\begin{eqnarray}
& & \mathbb{P}\left[ \text{Majority is Correct}\right] \\
& = & \mathbb{P}\left(\frac{\sum_{i=1}^n C_i}{n}  > \frac{k+1}{2} \right)\\
& = & \mathbb{P}\left( \frac{\sum_{i=1}^n C_i}{n} - \frac{k+1}{2} - a \frac{k}{2}   > - a \frac{k}{2}  \right) \\
& \geq & \mathbb{P}\left( a \frac{k}{2} > \frac{\sum_{i=1}^n C_i}{n} - \frac{k+1}{2} - a \frac{k}{2}   > - a \frac{k}{2}  \right) \\
& \geq & \mathbb{P}\left( \left| \frac{\sum_{i=1}^n C_i}{n} - \mathbb{E}[ C_i ] \right| <  \varepsilon_1  \right) \label{eq:constant_a} \\
& \geq & 1 - \varepsilon_2
\end{eqnarray}
where in equation \eqref{eq:constant_a}, 
we have use equation \eqref{eq:expectation} to introduce the expectation of any classifier $\mathbb{E}[ C_i ]$.

The last equation states that for any $\varepsilon_2 > 0$, there exists $N$ such that for every $n \geq N$, we have 
\begin{eqnarray}
\mathbb{P}\left[ \text{Majority is Correct}\right] \geq &1 - \varepsilon_2 \label{eq:majority}
\end{eqnarray}

As equation \eqref{eq:majority} holds for any \(\varepsilon_2 \), this effectively proves that the probability of taking the true label would converge to $1$. Hence, we have effectively proved that majority vote function would choose the true label with a probability that converges to $1$ as the number of classifiers becomes large and tends to infinity.
Last but not least, it is worth noticing that we have used in our proof that the advantage $a$ is constant among all classifiers in equation \eqref{eq:constant_a}. \\

A few remarks apply. The result can generalised to non constant advantages functions and this is the subject of theorem \ref{theo:Advantage condition}. The  Khintchine weak law of large numbers is justified as the variance of the average of predicted values of our classifiers is finite. We could have also be tempted to use the Kolmogorov strong law of large numbers to establish that the convergence of the average of predicted values of our classifiers  converges almost surely to the expected value of our i.i.d. classifiers \citep{kolmogorov1933foundations}. However, in pure generality, since the convergence is not necessarily uniform on the set where it holds, it does not imply that with probability $1$, we have that for any $\varepsilon > 0$, the inequality $\left| \frac{\sum_{i=1}^n C_i}{n} - \mathbb{E}[ C_i ] \right| < \varepsilon$ holds for all large enough n, which would be problematic. As a matter of fact, it turns out that the latter does hold as again the variance of  variance of the average of predicted values of our classifiers is finite, which somehow  circles us back to to the  Khintchine weak law of large numbers.
\end{proof}

\begin{corollary}
 Additionally if the independence assumption does not hold, the majority classifier $C_{\text{bag}}$ should not perform better than the best classifier.
 \end{corollary}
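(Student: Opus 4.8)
The plan is to prove the corollary by inspecting exactly where independence entered the proof of Theorem~\ref{prop:Condorcet Theorem} and showing that removing it dismantles the convergence guarantee, with the fully dependent case serving as an explicit witness. First I would recall that the sole role of the independence assumption (condition~4 of the IWTUB set) was to license the Khintchine weak law of large numbers, which forced the empirical average $\frac{1}{n}\sum_{i=1}^n C_i$ to concentrate around the common expectation $\mathbb{E}[C_i]$. It is precisely this concentration that pushed $\mathbb{P}[\text{Majority is Correct}]$ to $1$ in \eqref{eq:majority}. Without independence there is no reason for the concentration to persist, so the chain of inequalities culminating in that bound is no longer available.

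Next I would make the failure quantitative through the variance of the average, since the proof itself leaned on variance finiteness. Writing $\mathrm{Var}\!\left(\frac{1}{n}\sum_{i=1}^n C_i\right) = \frac{1}{n^2}\left(\sum_{i=1}^n \mathrm{Var}(C_i) + \sum_{i\neq j}\mathrm{Cov}(C_i,C_j)\right)$, the uncorrelated case annihilates the cross terms and leaves an $O(1/n)$ quantity that vanishes. When the classifiers share correlated errors the covariance terms are positive and of order $n^2$, so the variance of the average no longer tends to $0$; the empirical mean fails to collapse onto $\mathbb{E}[C_i]$, and with it the argument for convergence to the true label collapses as well.

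Finally, to turn this into a concrete statement I would exhibit the extremal breakdown, namely perfect dependence: suppose the classifiers are maximally correlated, $C_1 = C_2 = \cdots = C_n$ almost surely. Then the majority vote satisfies $C_{\text{bag}} = C_1$, so its accuracy is exactly that of the shared classifier; since all classifiers are identically distributed this common value is also the accuracy of the best among them, and no improvement occurs. This single configuration already demonstrates that, once independence is dropped, bagging carries no guaranteed advantage over the best individual model.

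The main obstacle is that the conclusion ``should not perform better than the best classifier'' is a negative, non-quantitative claim: failure of independence does not pin down a unique behaviour, and some weakly dependent ensembles might still improve marginally. The honest content is therefore that the improvement guarantee of Theorem~\ref{prop:Condorcet Theorem} is \emph{lost}, and that in the worst (perfectly correlated) case the bagged classifier merely reproduces an individual one. I would accordingly phrase the proof to emphasise the loss of the guarantee together with the degenerate witness, rather than assert a strict impossibility of any gain.
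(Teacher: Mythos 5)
Your proposal is correct in substance, but it is worth knowing that the paper offers essentially no proof at all: its entire justification reads ``straightforward consequence of theorem \ref{prop:Condorcet Theorem}.'' So you have not merely taken a different route --- you have supplied the missing content. Your three steps are exactly the right ones: (i) locating independence as the sole hypothesis licensing the Khintchine weak law, so that dropping it severs the chain of inequalities leading to \eqref{eq:majority}; (ii) the variance decomposition $\mathrm{Var}\bigl(\frac{1}{n}\sum_i C_i\bigr) = \frac{1}{n^2}\bigl(\sum_i \mathrm{Var}(C_i) + \sum_{i\neq j}\mathrm{Cov}(C_i,C_j)\bigr)$, showing that persistent positive correlation keeps the variance of the average bounded away from zero and destroys concentration; and (iii) the perfectly correlated witness $C_1 = \cdots = C_n$, for which $C_{\text{bag}} = C_1$ and bagging provably gains nothing. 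Your closing caveat is also the mathematically honest reading: as stated, the corollary commits the fallacy of denying the antecedent --- failure of a sufficient condition does not entail failure of the conclusion, and indeed ensembles with \emph{negatively} correlated errors can outperform independent ones. What the corollary can legitimately mean is that the \emph{guarantee} of Theorem \ref{prop:Condorcet Theorem} is lost and that in the degenerate fully dependent case no improvement occurs, which is precisely how you phrase it. One small refinement: your step (ii) assumes positive covariances, which is the relevant regime for the paper's empirical story (models making correlated errors) but is a choice of scenario rather than a consequence of non-independence; your own final paragraph already implicitly acknowledges this, and making it explicit there would close the only loose end.
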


\begin{proof} straightforward consequence of theorem \ref{prop:Condorcet Theorem}
\end{proof}

\subsection{Non Uniform Classifiers}
Up to this point, our analysis has assumed that all classifiers are uniformly distributed and equally accurate. We will now relax these assumptions to consider classifiers that vary in performance but are still more effective than random guessing. To explore how these non-uniform classifiers compare to random chance, we refine the concept of \textit{Classifier Advantage}, which measures the extent to which a classifier outperforms random guessing.

\begin{definition}\textbf{Classifier Advantage:}\label{def:classifier_advantage} Let $t$ be the true label. For a Uniformly Biased Towards the Correct Alternative (UBTCA) classifier $C_i$, we define its positive advantage $a_i$ as its advantage or edge over random guessing:
\begin{equation}
a_i = P(C_i = t) - \frac{1}{k}
\end{equation}
\end{definition}

\begin{remark}
To have a probability above random guessing, the advantage $ a_i $ should be positive. Because our classifier has equal probability of choosing the incorrect alternatives, the alternative probability for $l \neq t$ is given by $
P(C_i = l)  = P(C_i \neq t) = \frac{1}{k} - \frac{a}{k-1} $. This leads in particular to an upper bound for the advantage to ensure that probabilities are positive: $ a \leq \frac{k-1}{k}$.
\end{remark}

We can now prove the condition of these classifiers relative to random guessing, establishing that the sum of advantages should grows significantly faster than $\sqrt{n}$, or equivalently that the advantage term should converge to zero slower than the inverse of the square root of the number of classifiers.\\

\begin{theorem}\label{theo:Advantage condition}
\textbf{Condorcet Jury Theorem for Non-Equal Classifiers:} A majority vote classifier for non-equal classifiers, biased towards the correct alternative, will converge to the true solution in probability as their number increases, provided the partial sum of advantages grows faster than the square root of their count:
\begin{equation}\label{eq:Advantage condition}
\frac{\sum_{i=1}^n a_i}{\sqrt{n}} \underset{ n \to \infty }{\to} \infty.
\end{equation}
\end{theorem}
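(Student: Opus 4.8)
The plan is to follow the same reduction as in the proof of Theorem~\ref{prop:Condorcet Theorem}, but to replace the Khintchine weak law of large numbers (which requires identically distributed variables) by a second-moment (Chebyshev) argument that needs only uniformly bounded variances. First I would invoke fairness and transitivity exactly as before to assume without loss of generality that the true label is $k$, so that the majority vote is correct precisely when $\frac{1}{n}\sum_{i=1}^n C_i > \frac{k+1}{2}$. The only change in the single-classifier computation is that now $P(C_i = k) = \frac{1}{k} + a_i$ and $P(C_i = l) = \frac{1}{k} - \frac{a_i}{k-1}$ for $l \neq k$, which gives the per-classifier expectation $\mathbb{E}[C_i] = \frac{k+1}{2} + a_i \frac{k}{2}$ and hence
\begin{equation}
\mathbb{E}\left[\frac{1}{n}\sum_{i=1}^n C_i\right] = \frac{k+1}{2} + \frac{k}{2}\cdot\frac{\sum_{i=1}^n a_i}{n}.
\end{equation}

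Next I would identify the \emph{signal}, i.e. the gap between this mean and the decision threshold $\frac{k+1}{2}$, which is exactly $\frac{k}{2}\cdot\frac{\sum_{i=1}^n a_i}{n}$. The event that the majority is wrong is contained in the event that the empirical average deviates from its mean by at least this signal, so
\begin{equation}
\mathbb{P}\left[\text{Majority is wrong}\right] \leq \mathbb{P}\left(\left|\frac{1}{n}\sum_{i=1}^n C_i - \mathbb{E}\left[\frac{1}{n}\sum_{i=1}^n C_i\right]\right| \geq \frac{k}{2}\cdot\frac{\sum_{i=1}^n a_i}{n}\right).
\end{equation}
The crucial observation is that each $C_i$ takes values in the bounded set $\{1, \dots, k\}$, so by Popoviciu's inequality its variance is bounded by a universal constant $\sigma^2 \leq (k-1)^2/4$ independent of $i$; by independence the variance of the sum is at most $n\sigma^2$, whence $\mathrm{Var}\!\left(\frac{1}{n}\sum_{i=1}^n C_i\right) \leq \sigma^2/n$.

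Applying Chebyshev's inequality to the displayed probability then yields
\begin{equation}
\mathbb{P}\left[\text{Majority is wrong}\right] \leq \frac{\sigma^2/n}{\left(\frac{k}{2}\cdot\frac{\sum_{i=1}^n a_i}{n}\right)^2} = \frac{4\sigma^2}{k^2}\cdot\left(\frac{\sqrt{n}}{\sum_{i=1}^n a_i}\right)^2,
\end{equation}
and the right-hand side tends to $0$ precisely under the hypothesis $\frac{\sum_{i=1}^n a_i}{\sqrt{n}} \to \infty$. This establishes convergence in probability and simultaneously explains why the threshold is exactly $\sqrt{n}$: the signal grows like $\sum_{i=1}^n a_i$ while the standard deviation of the empirical average decays like $n^{-1/2}$ after the $n^{-1}$ scaling, so the signal must outgrow the noise by the stated rate.

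I expect the main obstacle to be not the inequality itself but justifying the departure from the i.i.d.\ law of large numbers: one must argue that uniform boundedness of the variances is exactly what lets Chebyshev deliver the sharp $\sqrt{n}$ rate, and check that no identical-distribution assumption sneaks back in through the reduction to the label $k$ (the fairness/symmetry argument must be applied per-classifier, since the $a_i$ now differ). A secondary subtlety is that the uniform-bias (UBTCA) structure is still required so that the per-classifier expectation depends on $C_i$ only through the single scalar $a_i$; without it the mean of $\frac{1}{n}\sum_{i=1}^n C_i$ would not reduce to the clean form above and the signal could not be read off as $\frac{k}{2n}\sum_{i=1}^n a_i$.
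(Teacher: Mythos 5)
Your proof is correct, and it takes a genuinely different route from the paper's. The paper, after the same reduction to the event $\frac{S_n}{n} > \frac{k+1}{2}$, rewrites the condition as $\sqrt{n}\left(\frac{S_n}{n} - \mathbb{E}\left[\frac{S_n}{n}\right]\right) + \frac{k}{2\sqrt{n}}\sum_{i=1}^n a_i > 0$ and invokes the Lyapunov central limit theorem (or the CLT for triangular arrays) to replace the first term by a $\mathcal{N}(0,\Sigma)$ limit --- and to do so it must \emph{additionally} assume that the advantages $a_i$ converge to a limit $a_\infty$, so that the individual variances converge to a finite $\Sigma$. Your Chebyshev argument dispenses with both the CLT and that extra convergence hypothesis: the uniform bound $\mathrm{Var}(C_i) \leq (k-1)^2/4$ is automatic for variables valued in $\{1,\dots,k\}$, so your proof is simultaneously more elementary and slightly more general, and it yields a quantitative, nonasymptotic error bound $\mathbb{P}[\text{Majority is wrong}] \leq \frac{4\sigma^2}{k^2}\bigl(\sqrt{n}/\sum_{i=1}^n a_i\bigr)^2$ that the paper's asymptotic argument does not provide. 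What the heavier CLT machinery buys the paper is the converse direction stated in its proof narrative: when $\frac{1}{\sqrt{n}}\sum_{i=1}^n a_i \to 0$, the symmetry of the limiting $\mathcal{N}(0,\Sigma)$ shows the majority vote is correct with probability tending to $\tfrac{1}{2}$, i.e.\ the $\sqrt{n}$ threshold is sharp; your one-sided Chebyshev bound cannot detect this, but the theorem as stated only claims sufficiency of the divergence condition, which you fully establish. One caveat you share with the paper: both arguments rest on the reduction identifying ``majority correct'' with $\frac{S_n}{n} > \frac{k+1}{2}$, which uses the pathwise identity $N_j = N_1$ for all incorrect labels $j \neq k$; this uniformity of errors holds in distribution but not realization by realization, so this step is equally heuristic in both proofs and is not a gap you introduced.
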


\begin{proof} 

Consider a sequence of independent and identically distributed (i.i.d.) random variables \(C_1, C_2, \ldots\), which we refer to as classifiers. Each classifier \(C_i\) can produce multiple potential outcomes, but we assume a uniform distribution among errors. Using the symmetry of our problem, without loss of generality we can assume that the true label among $k$ possible ones is the largest value to make calculations easier. The assumption of equal distribution among errors transforms the initial multinomial setting into a binomial case, where the classifier either predicts the true label $t=k$ or the wrong labels $j \neq k$. \\

In a balanced binomial scenario, the probability of success (correct classification) and failure (incorrect classification) is traditionally set to \(1/2\). However, given our setup where classifiers do not distinguish between the types of incorrect labels and the error is uniformly distributed across \(k\) possible outcomes, the probability of an correct classification for each trial is \(1/k\). Denoting as in traditional statistical textbooks the partial sum \(S_n = \sum_{i=1}^{n} C_i\) of the outcomes of our classifiers \(C_i\), the sequence \(\sqrt{n}\left(\frac{S_n}{n} - \mathbb{E}\left[ \frac{S_n}{n} \right] \right)\), converges in distribution to a normal distribution \(\mathcal{N}(0, \sigma^2)\) as \(n \to \infty\), where \(\sigma^2 \) is the variance of our i.i.d.s classifiers\\

In our specific settings, classifiers are not any more identically distributed as their advantage function $a_i$ is not constant. However, we can still apply the Lyapunov central limit theorem (see \citep{billingsley1986probability} p. 371), or the central limit theorem for triangular
arrays (see \citep{kallenberg2002foundations} Theorem 5.15) to see that the central limit theorem still holds as the individual probabilities of true classification $p_i = \frac{1}{k} + a_i$ converges or equivalently that the advantage functions $ a_i$  converges and consequently that the individual variance of our classifier converges to a finite value which we denote $\Sigma$ and we will determine later.\\
Hence, in our case, we have
\begin{eqnarray} \label{eq:CT}
\sqrt{n} \left(\frac{S_n}{n} - \mathbb{E}\left[ \frac{S_n}{n} \right] \right) \xrightarrow[n \to \infty]{} \mathcal{N} (0, \Sigma )
\end{eqnarray}

Let us refine our computation. For each classifier, we can compute their expectation and variance as follows
\begin{eqnarray}
\mathbb{E}[ C_i ] & = & k  P(C_i = k) + \sum_{j=1}^{k-1} j  P(C_i = j) \\
& = &  \frac{k+1}{2} + a_i \frac{k}{2} \\
\mathbb{E}[ C_i^2 ] &= & k^2  P(C_i = k) + \sum_{j=1}^{k-1} j^2  P(C_i = j) \\
 &=&  \frac{(k+1)(2k+1)}{6} + a_i \frac{k (8k-1)}{6} \\
\mathbb{V}[ C_i] &= &\mathbb{E}[ C_i^2 ]  - \left(\mathbb{E}[ C_i ] \right)^2 \\ 
& = &\frac{k^2-1}{12} + a_i \frac{k (5 k - 4) }{6} - a_i^2 \frac{k^2}{4}
\end{eqnarray}
The latter equation providing individual variance $\mathbb{V}[ C_i]$ can be seen as a quadratic function of $a_i$. It attains its maximum when its derivative given by \( \frac{k (5 k - 4) }{6} - 2  a_i \frac{k^2}{4} \) is null. This occurs when \(  a_i = \frac{5 k -4}{3k} \). In particular, for an advantage function $a_i$ ranging from $0$ to $\frac{k-1}{k}$, 
the individual variance $\mathbb{V}[ C_i]$ is bounded. In addition, if we assume that our advantage function $a_i$ converges to a finite value $a_\infty$ as $i$ tends to infinity, we have that the individual variance would also converge to a finite quantity $\Sigma$:
\begin{eqnarray}
\mathbb{V}[ C_i]  &\xrightarrow[i \to \infty]{} & \frac{k^2-1}{12} + a_\infty \frac{k (5 k - 4) }{6} - a_\infty^2 \frac{k^2}{4} \end{eqnarray}
Hence the asymptotic variance \( \Sigma\) is given by:
\begin{equation}
\Sigma = \frac{k^2-1}{12} + a_\infty \frac{k (5 k - 4) }{6} - a_\infty^2 \frac{k^2}{4}
\end{equation}

Using the linearity of the expectation, we have
\begin{eqnarray}
\mathbb{E}\left[ \frac{S_n}{n} \right] & = &  \frac{\sum_{i=1}^{n} \mathbb{E}\left[  C_i \right] }{n}   =  \frac{k+1}{2} +  \frac{k}{2n}  \sum_{i=1}^{n} a_i  \\
\end{eqnarray}

We have now reached a point where we can establish the proof of our theorem. The majority function will accurately indicate the true label if and only if:
\begin{eqnarray}
& &
\frac{S_n}{n}  > \frac{k+1}{2} \Leftrightarrow  \sqrt{n}  \left( \frac{S_n}{n} - \frac{k+1}{2} \right) > 0 \hspace{2cm}\\
& & \hspace{1cm} \Leftrightarrow   \sqrt{n} \left(\frac{S_n}{n} - \mathbb{E}\left[ \frac{S_n}{n} \right] \right) + \frac{k}{2 \sqrt{n} }  \sum_{i=1}^{n} a_i  > 0 
\end{eqnarray}
Using the asymptotic distribution provided in \eqref{eq:CT}, and for large value of $n$, this will be equivalent to
\begin{eqnarray}\label{eq:condition}
 \mathcal{N} (0, \Sigma ) + \frac{k}{2 \sqrt{n} }  \sum_{i=1}^{n} a_i  > 0
\end{eqnarray}
Hence,  should the following series diverge,
\begin{eqnarray}
\frac{\sum_{i=1}^n a_i}{\sqrt{n}} & \underset{ n \to \infty }{\to} & \infty,
\end{eqnarray}
thanks to dominated convergence, the inequality \eqref{eq:condition} will hold with a probability  that converges to 1 as $n$ tends to infinity as:
\begin{eqnarray}\label{eq:inequality2}
& & \lim_{n \to \infty}  \mathbb{P}\left[ \mathcal{N} (0, \Sigma ) + \frac{k}{2 \sqrt{n} }  \sum_{i=1}^{n} a_i  > 0 \right]\\
& =  & \mathbb{P}\left[ \mathcal{N} (0, \Sigma ) > -  \lim_{n \to \infty} \frac{k}{2 \sqrt{n} }  \sum_{i=1}^{n} a_i   \right]  \\
 & = & 1
\end{eqnarray}

This concludes the first proof stating that the divergence of the series ensures the correctness of the majority with probability converging to one with the growing number of classifiers $n$. Conversely, should the ratio of the sum of advantages to the square root of their total number approach zero as $n$ approaches infinity, the probability that the majority vote function identifies the correct outcome will converge to one half, given the symmetry of the asymptotic normal distribution $\mathcal{N} (0, \Sigma )$ around the origin. In other terms, the majority vote function will find the correct result like a random guessing, which concludes the proof.
\end{proof}

\begin{example}\textbf{Proper advantage functions}
We can provide multiple examples of advantage series meeting the divergence condition given by condition \eqref{eq:Advantage condition}. 
\begin{itemize}
    \item Logically, the simplest one is a constant strictly positive advantage: $ a_i = \lambda > 0$. This is easy to check as 
    \begin{eqnarray}
        \frac{\sum_{i=1}^n a_i}{\sqrt{n}} \sim   \frac{\lambda n \sqrt{n}}{2} & \underset{ n \to \infty }{\to} & \infty
    \end{eqnarray}
    \item Moreover, the advantage can have a logarithmic decrease to zero: $ a_i = \frac{1}{\log{i}}$. Indeed, using equivalent, we can check that the series properly diverges to infinity as follows:
    \begin{eqnarray}
        \frac{\sum_{i=1}^n a_i}{\sqrt{n}} \sim \frac{\sqrt{n}}{\log{n}} & \underset{ n \to \infty }{\to} & \infty
    \end{eqnarray}    
    
    \item  In the case of a polynomial decrease, it should be lower than the square root inverse as follows: $ a_i = \frac{1}{i ^{\alpha }}$ with $\alpha < 1/2$. For instance $ a_i = \frac{1}{ \sqrt[3]{i}}$ will work. In the general case, we can use the Riemann condition (\citep{apostol1974mathematical}) to check the divergence as
    \begin{eqnarray}
        \frac{\sum_{i=1}^n a_i}{\sqrt{n}} \sim n^{1/2-\alpha}  & \underset{ n \to \infty }{\to} & \infty
    \end{eqnarray}
\end{itemize}
\end{example}

\sectionWithLabel{Bagging experiment}
In this section, we introduce the results of the experiments made for comparing the state-of-the-art LLMs in financial sentiment classification: FinBERT \citep{araci2019finbert}, DistilRoBERTa \citep{sanh2020distilbert} and GPT-4. We detail the framework and the individual abilities of each models. This experimental section enables us to validate the non-applicability of the Condorcet's theorem in this context and subsequently to question the assumption of models independence.

\begin{figure*}
    \centering
    \resizebox{0.85 \textwidth}{!}{
    \includegraphics{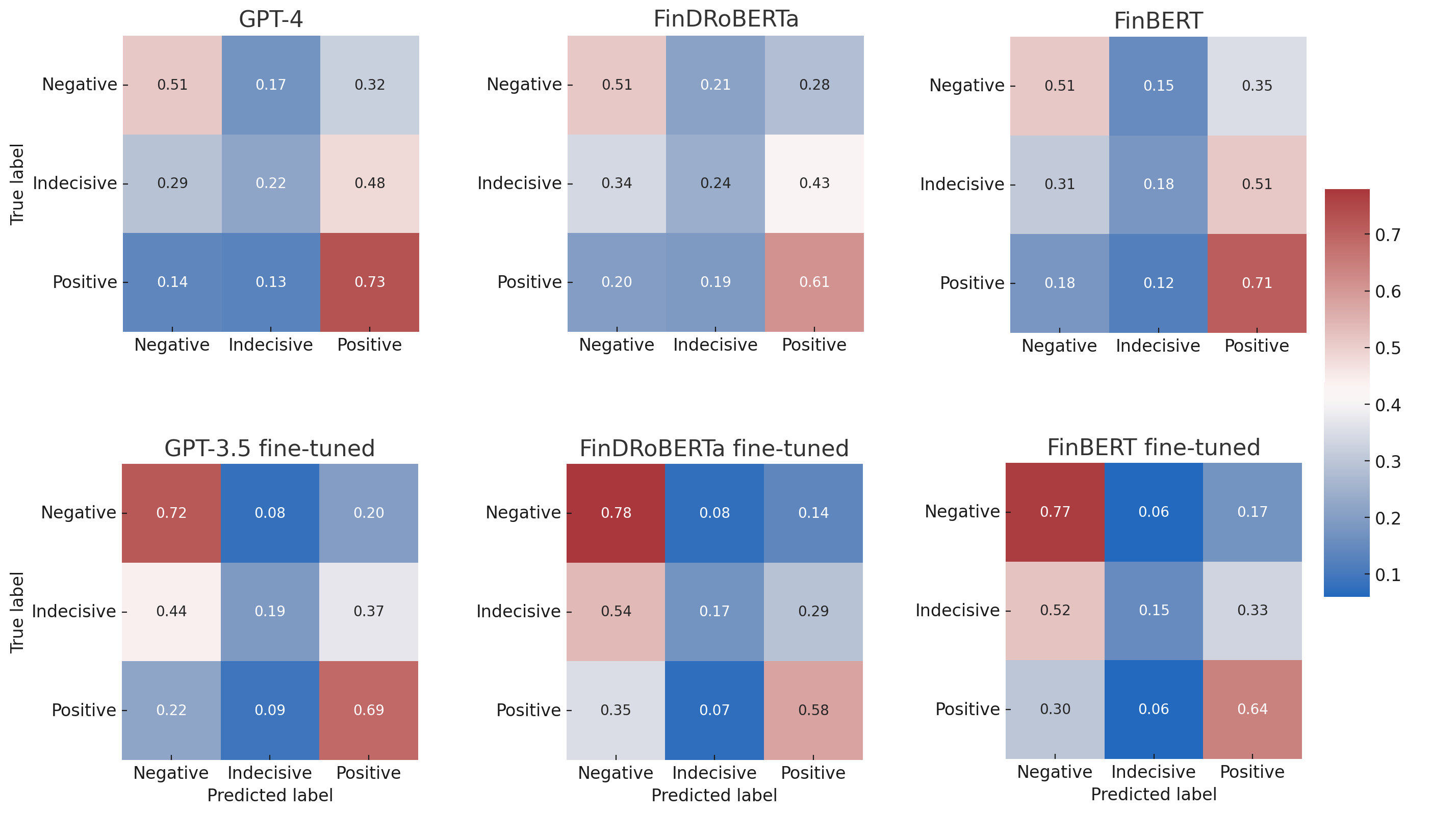}
    }
    \caption{Confusion matrices for each assessed models}
    \label{fig:confusion_matrices}
\end{figure*}

\subsection{Experimental framework}
We use a proprietary dataset of headlines generated from Bloomberg news called market wraps that span from 2010 to 2024 and is known to be of high quality as it is scrutinized by market professionals \citep{alzazah2020recent}. These headlines are a consolidated summary of daily financial news, done by human journalists specialized in finance, highly regarded and extensively followed by professionals within the financial sector. The database has about 65,000 rows representing on average 15 major headlines per day over 15 years. For each headlines, the database comes with the corresponding next day returns of major equities markets, hence providing a way to systematically provides a news sentiment that is predictive of future impact by definition. We can therefore validate if a model is able to predict effectively a predictive news sentiment. This dataset is containing high-quality data and is closely followed by the market players \citep{garcia2022detection}. The labels are balanced as detailed in table \ref{tab:data_distribution}. All these arguments make this dataset valuable for fine-tuning and accurately assessing the performance of the considered LLMs on the financial classification task.

\begin{table}[!htbp]
    \centering
    \caption{Number of instance in each class with percentage}
    \label{tab:data_distribution}
    \begin{tabular}{ccc}
        \toprule
        \bfseries Negative  & \bfseries Neutral  & \bfseries Positive \\
        \midrule
        19254 (\textit{31\%}) & 16202 (\textit{27\%}) & 25795 (\textit{42\%}) \\
        \bottomrule
    \end{tabular}
\end{table}

The dataset we present, as detailed in Table \ref{tab:dataset_sample}, is specifically designed to provide insights into economic events through news headlines, each linked to precise dates and associated sentiment classifications. This arrangement is crucial for analyzing how sentiments expressed in financial headlines correlate with market events and conditions, offering a robust foundation for temporal trend analysis over an extensive period from 2010 to 2024. This allows for longitudinal studies examining the evolution of sentiment in response to global economic, political, and social changes. 

Each row of the dataset contains a sentiment label—Positive, Negative, or Indecisive—systematically assigned by an algorithm, enhancing its utility for financial forecasting. The robustness of this dataset, especially compared to public datasets such as Financial PhraseBank \citep{PhraseBank} and StockNet Dataset \citep{StockNet}, stems from its focus on actionable investment decisions without the noise often introduced by social media content or human annotation, which can be less predictive or non systematic and subjective. In particular, the direct linkage of each sentiment classification to subsequent market performance in this private dataset ensures that the labels reflect predictive market events, as discussed in the literature \citep{sezer2020financial,ashtiani2023news,xing2018natural}.

The primary objective of classifying sentiment in financial headlines is to generate predictive signals for the market. In our analysis, if a Large Language Model (LLM) can accurately classify the sentiment of all stocks mentioned in the headlines, it would yield a valuable trading signal. This capability underscores the dataset's effectiveness in enhancing market outcome predictions.

\begin{table}[!htbp]
\centering
\caption{Dataset Sample}
\label{tab:dataset_sample}
\resizebox{\columnwidth}{!}{%
    \begin{tabular}{llc}
        \toprule
        \bfseries Date & \bfseries Headline & \bfseries Sentiment \\
        \midrule
        2010-01-04 & Oil Prices Surge Above \$81 a Barrel Due to U.S. Cold Weather & Positive \\
        \ldots & \ldots & \ldots \\
        2014-07-02 & Euro and Pound Gain on Hawkish Central Bank Policies & Negative \\
        \ldots & \ldots & \ldots \\
        2020-11-06 & S\&P 500 Index Sees First Retreat in a Week & Indecisive \\
        \ldots & \ldots & \ldots \\
        2024-01-30 & General Motors Beats Q4 Expectations, Expects Profit Growth & Positive \\
        \bottomrule
    \end{tabular}
}
\end{table}

\subsection{Models assessment}
At this step, we examine the individual performances of the state-of-the-art models. They have both different architecture and different number of parameters. However they are considered as comparable for this classification task \cite{araci2019finbert, openai2023gpt4}.

\begin{table}[!htbp]
\centering
\caption{Model Performances Before Fine-Tuning}
\label{tab:perf_before_finetuning}
    \begin{tabular}{lccc}
        \toprule
        \bfseries Model & \bfseries F-score $\downarrow$ & \bfseries Precision & \bfseries Recall \\
        \midrule
        \bfseries GPT-4 & \bfseries 0.47 & \bfseries 0.48 & \bfseries 0.49 \\
        GPT-3.5 & 0.45 & 0.47 & 0.47 \\
        DistilRoBERTa & 0.44 & 0.45 & 0.45 \\
        FinBERT & 0.44 & 0.46 & 0.46 \\
        Random  Classifier  & 0.33 & 0.33 & 0.33 \\
        \bottomrule
    \end{tabular}
\end{table}
Table \ref{tab:perf_before_finetuning} illustrates that generative models such as those in the GPT series do not outperform encoder-only models based on the BERT architecture, despite GPT's substantially larger number of parameters. This finding challenges the assumed predictive superiority of GPT models over more compact architectures like BERT. However, it is evident that all models significantly surpass the random benchmark, indicating their capacity to learn from the labeled data provided.

The extensive Bloomberg dataset utilized in this study facilitates precise fine-tuning of these models. We divided the dataset into two segments: one for training the models and the other for evaluating their performance, as shown in Table \ref{tab:perf_after_finetuning}, which reports the models' results on the test set. It is noteworthy that at the time of writing this paper, GPT-4 could not be fine-tuned, but we were able to fine-tune the latest version of GPT-3.5. Given the similar performance of the non-fine-tuned models, we anticipate comparable results when GPT-4 becomes available for fine-tuning.

The fine-tuned models demonstrate no significant differences in performance among themselves; all exhibit uniform improvements. This indicates that the fine-tuning process equally enhances each model, without any single model demonstrating distinct superiority. Thus, while individual models are equally enhanced, none emerges as distinctly superior.

\begin{table}[!htbp]
    \centering
    \caption{Fine-Tuned Model Performances}
    \label{tab:perf_after_finetuning}
    \begin{tabular}{lccc}
        \toprule
        \bfseries Model & \bfseries F-score $\downarrow$ & \bfseries Precision & \bfseries Recall \\
        \midrule
        \bfseries SFT GPT-3.5 & \bfseries 0.51 &
        \bfseries 0.54 & 
        \bfseries 0.53 \\
        SFT DistilRoBERTa & 0.49 & 0.53 & 0.51 \\
        SFT FinBERT & 0.50 & 0.54 & 0.52 \\
         Random Classifier & 0.33 & 0.33 & 0.33 \\
        \bottomrule
    \end{tabular}
\end{table}

Based on these performances, we apply an ensemble strategy. Since all the models have comparable performances we could expect a performance improvement if the Condorcet's hypothesis hold. Below is the list of the two different bagging configurations, including Supervised Fine-Tuned (SFT) models.
\begin{itemize}
    \item Bagging 1: SFT GPT-3.5 + SFT DistilRoBERTa + SFT FinBERT
    \item Bagging 2: All the models SFT and No SFT
\end{itemize}

\begin{table}[!htbp]
    \centering
    \caption{Ensemble method performance compared with the best solo performer model SFT GPT-3.5. The difference of metric between the Bagging methods and the SFT GPT-3.5 model is provided.}
    \label{tab:ensemble_performance}
    \begin{tabular}{lccc}
        \toprule
        \bfseries Model & \bfseries $\Delta$ F-score $\downarrow$ & \bfseries $\Delta$ Precision  & \bfseries $\Delta$ Recall \\
        \midrule
        Bagging 2 vs SFT GPT-3.5 & 0.01 & 0.00 & -0.01 \\
        Bagging 1 vs SFT GPT-3.5 & 0.00 & 0.01 & -0.01 \\
        \bottomrule
    \end{tabular}
\end{table}

Table \ref{tab:ensemble_performance} provides that no significant improvement is made with a bagging method. The ensemble classifier could not improve the overall performance.

\sectionWithLabel{Discussion}
Our experiments have confirmed that three of the four critical hypotheses necessary for applying Condorcet's theorem hold, namely.

\begin{enumerate}
\item \textbf{Identical Distribution} The confusion matrices in Figure \ref{fig:confusion_matrices} demonstrate that the classification performance distributions across the models are consistent. Both fine-tuned and non-fine-tuned models exhibit comparable results across different classes, with the probability distributions of each LLM being markedly similar.

\item \textbf{Better than random} As evidenced by the confusion matrices in Figure \ref{fig:confusion_matrices} and performance data in Tables \ref{tab:perf_before_finetuning} and \ref{tab:perf_after_finetuning}, all LLMs in our study perform significantly better than a random classifier.

\item \textbf{Uniform distribution among incorrect alternatives} The confusion matrices in Figure \ref{fig:confusion_matrices} indicate that the error distribution among incorrect choices is uniform across models. The similarity in error distribution and the high correlation in the instances where models fail further support this hypothesis.
\end{enumerate}

Since the three hypothesis hold and are confirmed by the experiments, the last condition of IWTUB set can not hold, namely the independence hypothesis of the LLMs models used in our bagging experiment. In particular, this indicates that GPT models cannot be considered as independent or different models from simpler models like FinBERT and DistilRoBERTa. In particular, this suggests that the reasoning capacity of GPT models may not be very different  form the smaller models and that they lack the capacity to reason on complex financial tasks like financial sentiment analysis. 

\sectionWithLabel{Conclusion}
This paper extends the Condorcet Jury Theorem, previously limited to binary decisions, to multi-class classification scenarios. This advancement incorporates the new concept of IWTUB set, introduced to facilitate the extension of the theorem to multi-class classification systems. Our empirical investigation of the applicability of a majority vote classifier revealed that despite the advanced capabilities of LLMs, such as GPT-3.5 or 4, there are only marginal enhancements in predictive accuracy when combined with smaller models. 
This outcome suggests a significant overlap in the decision-making processes of these models, underscoring their limited independence, which undermines the effectiveness of the majority voting mechanism from the Condorcet Jury Theorem and highlights the lack of reasoning abilities of LLMs for complex NLP task like sentiment analysis.  The use of the Condorcet Jury theorem is very complementary to traditional statistical tests for independence, such as Pearson's chi-squared test, Spearman's rank correlation, and mutual information, that only focus on establishing the absence of a relationship between two variables in a dataset as the Condorcet Jury theorem leverages the assumption of voter competence and independence to predict the likelihood of correct decision-making in group settings. 
Future research should complement this approach by investigating other methods, like statistical ones to assess the non-independence among classifiers and exploring scenarios where large language models (LLMs) are effectively implemented in ensemble settings.

\clearpage
\bibliography{main}

\end{document}